\def\BibTeX{{\rm B\kern-.05em{\sc i\kern-.025em b}\kern-.08em
    T\kern-.1667em\lower.7ex\hbox{E}\kern-.125emX}}
\newtheorem*{theorem*}{\bfseries Theorem}
\newtheorem{theorem}{\bfseries Theorem}
\newtheorem{lemma}{\bfseries Lemma}
\newtheorem*{lemma*}{\bfseries Lemma}
\newtheorem{assumption}{\bfseries Assumption}
\providecommand{\iprod}[2]{\ensuremath{\left\langle #1,\,#2  \right\rangle}}
\providecommand{\norm}[1]{\ensuremath{\left\lVert#1\right\rVert }}
\providecommand{\mnorm}[1]{\ensuremath{\left\lvert#1\right\rvert}}
\def\E{\mathbb{E}}
\def\R{\mathbb{R}}
\def\D{\mathcal{D}}
\def\z{{\bf z}}
\def\M{\mathsf{M}}
\def\g{\mathfrak{g}}
\def\var{\mathsf{Var}}
\def\H{\mathcal{H}}
\def\B{\mathcal{B}}
\begin{document}

\title{Byzantine Fault-Tolerant Distributed Machine Learning using D-SGD and Norm-Based Comparative Gradient Elimination (CGE)
 \thanks{Research reported in this paper was sponsored in part by the Army Research Laboratory under Cooperative Agreement W911NF-17-2-0196, and by Fritz Fellowship from Georgetown University.}
}

\author{\IEEEauthorblockN{Nirupam Gupta}
\IEEEauthorblockA{\textit{EPFL} \\
Lausanne, Switzerland \\
nirupam115@gmail.com}
\and
\IEEEauthorblockN{Shuo Liu}
\IEEEauthorblockA{\textit{Georgetown University} \\
Washington, D.C., USA \\
sl1539@georgetown.edu}
\and
\IEEEauthorblockN{Nitin Vaidya}
\IEEEauthorblockA{\textit{Georgetown University} \\
Washington, D.C., USA \\
nv198@georgetown.edu}
}

\maketitle

\begin{abstract}
    This paper considers the Byzantine fault-tolerance problem in distributed stochastic gradient descent (D-SGD) method -- a popular algorithm for distributed multi-agent machine learning. In this problem, each agent samples data points independently from a certain data-generating distribution. In the fault-free case, the D-SGD method allows all the agents to learn a mathematical model best fitting the data collectively sampled by all agents. We consider the case when a fraction of agents may be Byzantine faulty. Such faulty agents may not follow a prescribed algorithm correctly, and may render traditional D-SGD method ineffective by sharing arbitrary incorrect stochastic gradients. We propose a norm-based gradient-filter, named {comparative gradient elimination} (CGE), that robustifies the D-SGD method against Byzantine agents. We show that the CGE gradient-filter guarantees fault-tolerance against a bounded fraction of Byzantine agents under standard stochastic assumptions, and is \textit{computationally simpler} compared to many existing gradient-filters such as multi-KRUM, geometric median-of-means, and the spectral filters. We empirically show, by simulating distributed learning on neural networks, that the fault-tolerance of CGE is comparable to that of existing gradient-filters. We also empirically show that \textit{exponential averaging} of stochastic gradients improves the fault-tolerance of a generic gradient-filter.

\end{abstract}



\section{Introduction} 
\label{sec:intro}
The problem of distributed multi-agent learning or {\em federated} learning has gained significant attention in recent years~\cite{boyd2011distributed, duchi2011dual, smith2017federated, yang2019federated}. 
In this problem, 
there are multiple machines or {\em agents} in the system each sampling data points locally and independently. The goal is to design {\em distributed algorithms} that allow the agents to compute or {\em learn} a common mathematical model that optimally fits the data points collectively sampled by all the agents. 
Most prior works in distributed learning consider a fault-free setting wherein all the agents are free from faults and follow a prescribed algorithm honestly. However, in practical distributed systems, some agents may be faulty~\cite{blanchard2017machine, chen2017distributed, alistarh2018byzantine, data2020byzantine, guerraoui2018hidden, su2016fault, sundaram2018distributed}. 

We consider a system with $n$ agents where up to $f$ agents are Byzantine faulty. The identity of the Byzantine agents is a priori unknown. Byzantine agents may collude and share arbitrary incorrect information with other non-faulty agents~\cite{lamport1982byzantine}. For instance, Byzantine agents may share information corresponding to {\em poisonous data points}; see~\cite{steinhardt2017certified} and references therein. In the presence of such faulty agents, a reasonable goal is to design a distributed algorithm that allows all the {\em non-faulty} agents to learn a mathematical model that optimally fits the data points only sampled by the non-faulty agents. 
A standard formulation of fault-tolerance in distributed learning is presented below.

{\bf Fault-tolerance in distributed learning:}
Each non-faulty agent samples data points independently and identically from a true data-generating distribution $\D$ over the $m$-dimensional real vector space $\R^m$. 
The non-faulty agents a priori fix a learning model $\Pi$, e.g., a neural network~\cite{bottou2018optimization}, characterized by $d$ real-valued parameters compactly denoted by a vector $w \in \R^d$. For a given parameter vector $w$, each data point $z \in \R^m$ incurs a {\em loss} defined by a real-valued {\em loss function} $\ell: (w, \, z) \mapsto \R$. We define the {\em non-faulty expected loss function}:
\begin{align}
    Q(w) = \underset{z \sim \D}{\E} \, \ell(w, \,z), \quad \forall w \in \R^d. \label{eqn:exp_loss}
\end{align}
The goal of a Byzantine fault-tolerant distributed learning algorithm is to allow the non-faulty agents to compute an optimal learning parameter $w^*$ that minimizes $Q(w)$, despite the presence of Byzantine faulty agents. 


{\bf System architecture:}
We consider a synchronous server-based architecture shown in Fig.~\ref{fig:algo}. The server is assumed trustworthy, but up to $f$ agents may be Byzantine faulty. The trusted server helps solve the distributed learning problem in coordination with the agents.


\begin{figure}[htb!]
\centering
\centering \includegraphics[width=0.3\textwidth]{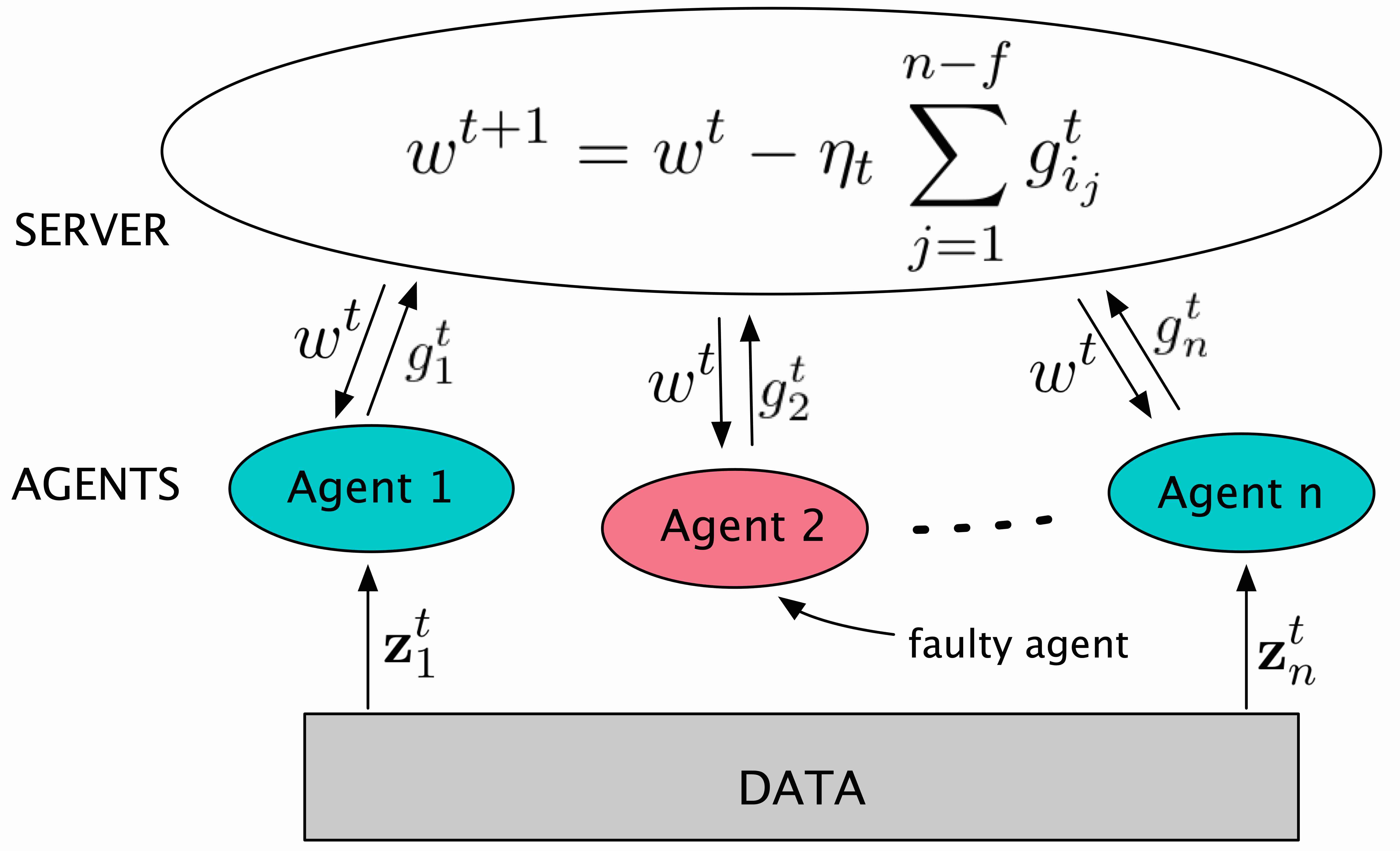} 
\caption{\footnotesize{\it The system architecture.}}
\label{fig:algo}
\end{figure}

{\bf Fault-tolerance in the distributed stochastic gradient descent (D-SGD) method:} We propose a fault-tolerance mechanism that confers fault-tolerance to 
the D-SGD method -- a standard distributed machine learning algorithm~\cite{bottou2018optimization}.
The D-SGD method is an iterative algorithm wherein the server maintains an estimate of an optimal learning parameter, which is updated iteratively using {\em stochastic} gradients computed by the different agents using i.i.d.~data points sampled from~$\D$, as shown in Figure~\ref{fig:algo}. In the fault-free setting, the D-SGD method converges to an optimal learning parameter, if the stochastic gradients have bounded variances~\cite{bottou2018optimization}. 
However, the traditional D-SGD method is rendered ineffective in the presence of Byzantine faulty agents that may send arbitrary incorrect stochastic gradients to the server~\cite{blanchard2017machine, chen2017distributed, guerraoui2018hidden}. 

Our proposed fault-tolerance mechanism relies on a norm-based {\em gradient-filter}, named {\em comparative gradient elimination} (CGE), that mitigates the detrimental impact of arbitrary incorrect stochastic gradients to the correctness of the D-SGD method. In our CGE gradient-filter, in each iteration, the server eliminates $f$ (out of $n$) received stochastic gradients with $f$ largest Euclidean norms. The current estimate in each iteration is updated using the average of the $n-f$ remaining stochastic gradients. Details of our algorithm, and its fault-tolerance property are presented later in Section~\ref{sec:algo}. Please refer Figure~\ref{fig:algo} for an illustration.

In contrast to the other previously introduced applications of norm-based gradient elimination solving other unrelated problems, e.g.,~\cite{pascanu2012understanding, shokri2015privacy}, our CGE gradient-filter employs an {\em adaptive} threshold. Specifically, in our case, the threshold for eliminating stochastic gradients is not a constant but varies depending upon the norms of the non-faulty agents' stochastic gradients. This difference is critical for the fault-tolerance property of the CGE gradient-filter.

We also incorporate an {\em exponential averaging} scheme where the server uses exponentially weighted averages of the agents' stochastic gradients, instead of their instantaneous values in the D-SGD method. We empirically show in Section~\ref{sub:appendix_averaging} that exponential averaging can notably improve the performance of a fault-tolerance mechanism such as the CGE gradient-filter.

\subsection{Summary of our contributions} 
\label{sub:contri}
We summarize below our main contributions and results.



\begin{itemize}
\setlength{\itemsep}{0.3em}
    \item {\bf A new algorithm and its fault-tolerance property:} We show that our algorithm, D-SGD method with CGE gradient-filter, guarantees Byzantine fault-tolerance under standard assumptions~\cite{bottou2018optimization}. Our result is summarized below, and presented formally in Section~\ref{sec:ft}. Notation $Pr$ denotes probability. \\
    
    \noindent 
    \fbox{\begin{minipage}{0.44\textwidth}
    \begin{theorem*}[Informal] Suppose that the variance $\sigma^2$ of the non-faulty stochastic gradients is bounded, and the loss function $Q(w)$ is $\lambda$-strongly convex with $\mu$-Lipschitz continuous gradients. Let $w^*$ denote a minimum of $Q(w)$ and $w^t$ denote an estimate of $w^*$ at the server in the $t$-th iteration of our algorithm. If $f/n$ is less than $\lambda/(2\lambda + \mu)$ then $\exists \rho \in (0, \, 1)$ and $M \in \Theta\left(\sigma\right)$ such that, $\forall \epsilon > 0$,
    \end{theorem*}
    \end{minipage}}
    \fbox{\begin{minipage}{0.44\textwidth}
    \begin{align*}
        \lim_{t \to \infty} Pr \left( \norm{w^t - w^*}^2 \leq \epsilon \right) \geq 1 - \frac{1}{\epsilon} ~ \left( \frac{\M^2}{1 - \rho}\right).
    \end{align*}
    \end{minipage}}
    Our algorithm is also effective in the heterogeneous data setting, where different agents may have different data distribution, provided that the agents' respective loss functions have some minimal redundancy, see~\cite{gupta2020fault_podc, gupta2019byzantine, liu2021approximate}. For simplicity, we only consider the homogeneous data setting in this paper.
    
    \item {\bf Exponential averaging for improved fault-tolerance:} We empirically show (for the first time) the efficacy of {\em exponential averaging} of stochastic gradients for improving Byzantine fault-tolerance of a gradient-filter.

    \item {\bf Empirical results:} We empirically show in Section~\ref{sec:exp} the efficacy of our algorithm for distributed learning on neural networks. We conduct experiments on a benchmark classification task, MNIST\cite{bottou1998online}, under varied fault types and fraction of faulty agents. 
        We show that the fault-tolerance of CGE gradient-filter is \textit{comparable} to the existing gradient-filters, namely multi-KRUM, geometric median-of-means, and coordinate-wise trimmed mean. 
\end{itemize}

\subsection{Comparisons with Related Work}
We present below key comparisons between CGE and other existing gradient-filters for fault-tolerance in D-SGD.

\begin{itemize}
\setlength{\itemsep}{0.3em}
    \item {\bf Computational simplicity:} The computational time complexity of CGE gradient-filter compares favourably to prominent gradient-filters, namely {\em multi-KRUM}~\cite{blanchard2017machine}, {\em geometric median-of-means}~\cite{chen2017distributed}, and  the {\em spectral gradient-filters}~\cite{charikar2017learning, data2020byzantine, diakonikolas2018sever, prasad2018robust}. In particular, the complexity of CGE gradient-filter is $\mathcal{O}(n (\log n + d))$, in comparison to $\mathcal{O}(n(n + d))$ of {multi-KRUM} and {geometric median-of-means}, and $\mathcal{O}(n d \min\{n, \, d\})$ of the {spectral} gradient-filters. We empirically show that despite its computational simplicity, the fault-tolerance of the CGE gradient-filter is comparable to these aforementioned gradient-filters. 
    \item {\bf Standard stochastic assumptions:} The coordinate-wise trimmed mean filter~\cite{yin2018byzantine}, the norm-based filter~\cite{ghosh2019communication}, and the signSGD filter~\cite{bernstein2018signsgd} have similar time complexity as CGE. However, the known fault-tolerance guarantees of these gradient-filters rely on strong assumptions about the non-faulty stochastic gradients that are uncommon in many learning problems~\cite{bottou2018optimization}. In particular,~\cite{ghosh2019communication, yin2018byzantine} assume non-faulty stochastic gradients to be {\em sub-exponential} random variables, and~\cite{bernstein2018signsgd} assumes non-faulty stochastic gradients to have a {\em unimodal symmetric} probability distributions. We, however, only assume the non-faulty stochastic gradients to have bounded variances -- an assumption that is common in all the prior works, and is required for the convergence of the D-SGD method even in the fault-free case~\cite{bottou2018optimization, bottou1998online}.
\end{itemize}

Other related works~\cite{li2019rsa, yang2019byrdie} consider the problem of Byzantine fault-tolerance in distributed learning using the {\em consensus optimization} methods~\cite{boyd2011distributed, nedic2009distributed}.



For obtaining the formal fault-tolerance property of our algorithm we assume the expected loss function $Q(w)$ to be strongly convex, unlike some of the aforementioned prior works. However, in learning problems when $Q(w)$ is convex, e.g.,~support vector machine and logistic regression, it is often regularized to a strongly convex function to mitigate overfitting~\cite{bottou2018optimization}. In many learning problems even if $Q(w)$ is not {\em globally} strongly convex, it is so in a neighborhood of local minimizers, thus our result can show the convergence of the D-SGD method with CGE gradient-filter in such regions of the search space.

\section{Proposed Algorithm}
\label{sec:algo}

In this section, we present our CGE gradient-filter for tolerating Byzantine faulty agents in  distributed learning using the D-SGD method.  The description of the algorithm below is followed by its fault-tolerance guarantee in Section~\ref{sec:ft}. 



\begin{algorithm}[htb!]
\SetAlgoLined
The server chooses the initial estimate $w^0$ arbitrarily from $\R^d$. Steps executed in each iteration $t$:
\\~\\

\noindent $\diamond ~$ {\bf Step S1}: The server broadcasts the current estimate $w^{t}$ to all the agents.
\\~\\

Each non-faulty agent $i$ sends to the server a stochastic gradient of the global expected loss function $Q(w)$ at $w^t$, i.e., a noisy estimator of the gradient $\nabla Q(w^t)$. A faulty agent may send an incorrect arbitrary vector for its stochastic gradient.
\\~\\

Let $g_i^t$ denote the gradient received by the server from agent $i$. If no gradient is from some agent $i$, $i$ must be faulty (since the system is assumed synchronous), and the server eliminates $i$ from the system.
%
\\~\\
    
\noindent $\diamond ~$ {\bf Step S2 (CGE gradient-filter):} The server sorts the $n$ received gradients as follows: 
\begin{align}
    \norm{g^t_{i_1}} \leq \ldots \leq \norm{ g^t_{i_{n-f}}} \leq \norm{ g^t_{i_{n-f+1}}} \leq \ldots \leq \norm{g^t_{i_{n}}}. \label{eqn:order}
\end{align}
where $g^t_{i_j}$, with $j$-th smallest norm, is from agent $i_j$. 

The server updates its current estimate using only $n-f$ stochastic gradients with smallest $n-f$ norms:
\begin{align}
    w^{t+1} = w^t - \eta_t \, \sum_{j = 1}^{n-f} g^t_{i_j} \label{eqn:algo_1}
\end{align}
where $\eta_t$ is the {\em step-size} of iteration $t$.

\caption{D-SGD with CGE Gradient-filter}
\label{alg:cge}
\end{algorithm}

Similar to the traditional D-SGD, the server maintains an estimate of an optimal learning parameter which is updated in each iteration using Algorithm~\ref{alg:cge}. For each iteration $t \in \{0, \, 1, \ldots\}$, let $w^t$ denote the estimate of the server. 
In \textbf{Step S1}, the server obtains from the agents their locally computed {\em stochastic gradients} of the expected loss function $Q(w)$ at $w^t$. There are multiple methods for computing stochastic gradients~\cite[Section 5]{bottou2018optimization}, one of which is described below. Note that a Byzantine faulty agent may send an arbitrary vector. In Step S2, to mitigate the detrimental impact of incorrect stochastic gradients,
the algorithm uses a filter to ``robustify" the gradient aggregation used for computing the updated estimate $w^{t+1}$. In particular, the server eliminates the stochastic gradients with the largest $f$ Euclidean norms, and uses the aggregate of the remaining $n-f$ stochastic gradients with $n-f$ smallest Euclidean norms to compute $w^{t+1}$, as shown in
Equation~\eqref{eqn:algo_1} below.
We refer to the method used in \textbf{Step S2} for elimination the largest
$f$ gradients as {\em Comparative Gradient Elimination} (CGE) gradient-filter, since the norms of the gradients are compared together to eliminate (or filter out) the gradients with the largest $f$ norms. 

To {compute a stochastic gradient} in an iteration $t$, a non-faulty agent $i$ samples
$k$ i.i.d.~data points $z^t_{i_1}, \ldots, \, z^t_{i_k}$ from the distribution $\D$. 
Then, for each non-faulty agent $i$, 
\begin{align}
    g^t_i = \frac{1}{k} \sum_{j = 1}^k \nabla \ell(w^t, \, z^t_{i_j}). \label{eqn:avg_grad}
\end{align}
$k$ is referred as the {\em data batch-size} or simply batch-size.

{\bf Complexity:} Note that computing the Euclidean norms of $n$ $d$-dimensional vector takes $\mathcal{O}(nd)$ time, and sorting of $n$ values takes $\mathcal{O}(n\log n)$ time. Thus, the time complexity of the CGE gradient-filter is $\mathcal{O}(n (d + \log n))$ in each iteration.


\subsection{Fault-Tolerance Property}
\label{sec:ft}
In this section, we present a formal convergence result for our algorithm under standard assumptions that hold true in most learning problems~\cite{bottou2018optimization}. 

For each non-faulty agent $i$, let 
\begin{align}
    \z^t_i = \{z^t_{i_1}, \ldots, \, z^t_{i_k}\} \label{eqn:non-f_data}
\end{align}
denote the collection of $k$ i.i.d.~data points sampled by the agent $i$ in iteration $t$. Now, for each agent $i$ and iteration $t$ we define a random variable 
\begin{align}
    \zeta^t_i = \left\{ \begin{array}{ccc} \z^t_i & , & \text{agent $i$ is non-faulty} \\ g^t_i & , & \text{agent $i$ is faulty}\end{array}\right. \label{eqn:def_zeta_i}
\end{align}
Recall that~$g^t_i$ may be an arbitrary $d$-dimensional random variable for each Byzantine faulty agent $i$. For each iteration $t$, let
\begin{align}
    \zeta^t = \{\zeta^t_i, ~ i = 1, \ldots, \, n\}, \label{eqn:def_zeta}
\end{align}
and let $\E_t$ denote the expectation of a function of the collective random variables $\zeta^0, \ldots, \, \zeta^t$, given the initial estimate $w^0$. Specifically, 
\begin{align}
    \E_t(\cdot) = \E_{\zeta^0, \ldots, \, \zeta^{t}} (\cdot), \quad \forall t \geq 0. \label{eqn:notation_exp}
\end{align}

We make the following assumptions that are satisfies in many machine learning problems~\cite{bottou2018optimization, yin2018byzantine}.

\begin{assumption}[Bounded variance]
\label{asp:bnd_var}
Assume that there exists a finite real value $\sigma$ such that for all non-faulty agent $i$,
\[\E_{\zeta^t_i} \norm{ g^t_i - \E_{\zeta^t_i} \left(g^t_i\right)}^2 \leq \sigma^2, \quad \forall t.\]
\end{assumption}

\begin{assumption}[Lipschitz smoothness]
\label{asp:lip}
Assume that there exists a finite positive real value $\mu$ such that
\begin{align*}
    \norm{\nabla Q(w) - \nabla Q(w')} \leq \mu \norm{w - w'}, \quad \forall w, \, w' \in \R^d.
\end{align*}
\end{assumption}

\begin{assumption}[Strong convexity]
\label{asp:str_cvx}
Assume that there exists a finite positive real value $\lambda$ such that for all $w, \, w' \in \R^d$,
\begin{align*}
    \iprod{w - w'}{\nabla Q(w) - \nabla Q(w')} \geq \lambda \norm{w - w'}^2.
\end{align*}
\end{assumption}

We define a {\em fault-tolerance margin}  
\begin{align}
    \alpha = \frac{\lambda}{2\lambda + \mu} - \frac{f}{n} \label{eqn:alpha_learn}
\end{align}
that determines the maximum fraction of faulty agents $f/n$ tolerable by our algorithm. Lastly, we define an upper bound for the step-size $\eta_t$ in~\eqref{eqn:algo_1},
\begin{align}
    \overline{\eta} = \left(\frac{2(2 \lambda + \mu)n}{n^2 + (n-f)^2 \mu^2 } \right) \, \alpha. \label{eqn:step_learn}
\end{align}


Theorem~\ref{thm:conv} below presents a key fault-tolerance property of our algorithm, i.e., Algorithm~\ref{alg:cge}. Recall that $w^*$ denotes a minimum of the global expected loss function $Q(w)$, i.e.,
\[w^* \in \arg \min_{w \in \R^d} Q(w).\]

\begin{theorem} 
\label{thm:conv}
Consider Algorithm~\ref{alg:cge}. Suppose that the Assumptions~\ref{asp:bnd_var},~\ref{asp:lip}, and~\ref{asp:str_cvx} hold true. If the fault-tolerance margin $\alpha$ is positive, and in the update law~\eqref{eqn:algo_1} the step-size $\eta_t = \eta \in \left(0, ~ \overline{\eta} \right)$ for all $t$ then 
\begin{align}
    \rho  = 1 - \left(n^2 + (n-f)^2 \mu^2 \right) \, \eta \left( \overline{\eta} - \eta \right) \in (0, ~ 1), \label{eqn:rho}
\end{align}
and, for all $t \geq 1$, 
\begin{align}
    \E_{t-1} \norm{w^{t} - w^*}^2 \leq \rho^{t} \norm{w^{0} - w^*}^2 + \left(\frac{1 - \rho^{t}}{ 1- \rho}\right) \M^2 \label{eqn:rate}
\end{align}
where
\begin{align}
    \M^2 = \left(\frac{f^2 \left( 1 + \sqrt{n-f-1}\right)^2}{n^2} + \eta^2 (n-f)^2 \right) \sigma^2. \label{eqn:ss_error}
\end{align}
\end{theorem}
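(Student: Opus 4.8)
The plan is to establish a single one-step contraction of the form $\E_t\norm{w^{t+1}-w^*}^2 \le \rho\,\E_{t-1}\norm{w^t-w^*}^2 + \M^2$ and then unroll it as a geometric recursion. Writing $\phi^t = \sum_{j=1}^{n-f}g^t_{i_j}$ for the aggregated filtered gradient, the update law~\eqref{eqn:algo_1} gives $\norm{w^{t+1}-w^*}^2 = \norm{w^t-w^*}^2 - 2\eta\iprod{w^t-w^*}{\phi^t} + \eta^2\norm{\phi^t}^2$. Setting $e_t = \E_{t-1}\norm{w^t-w^*}^2$ with $e_0 = \norm{w^0-w^*}^2$, once the conditional one-step bound is proven I would take $\E_{\zeta^t}[\,\cdot \mid \zeta^0,\dots,\zeta^{t-1}]$ and apply the tower property to obtain $e_{t+1}\le\rho e_t + \M^2$; unrolling this yields $e_t\le\rho^t e_0 + \frac{1-\rho^t}{1-\rho}\M^2$, which is exactly~\eqref{eqn:rate}. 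I would also verify $\rho\in(0,1)$ by direct computation: $\eta(\overline\eta-\eta)>0$ for $\eta\in(0,\overline\eta)$ forces $\rho<1$, while the specific value of $\overline\eta$ keeps the subtracted term below $1$ (its maximum over $\eta$ is attained at $\eta=\overline\eta/2$), giving $\rho>0$.

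The heart of the argument is controlling $\phi^t$ for each realization of $\zeta^t$, since the filter's selection $S=\{i_1,\dots,i_{n-f}\}$ is data-dependent and adversarial, so one cannot push the expectation through it. Let $\mathcal H$ and $\mathcal B$ denote the honest and faulty agents, and write $\phi^t - \sum_{i\in\mathcal H}g^t_i = \sum_{b\in S\cap\mathcal B}g^t_b - \sum_{h\in\mathcal H\setminus S}g^t_h$. The defining property of CGE in~\eqref{eqn:order}, that every retained gradient has norm no larger than every discarded one, lets me bound $\sum_{b\in S\cap\mathcal B}\norm{g^t_b}$ by $\sum_{h\in\mathcal H\setminus S}\norm{g^t_h}$ (there are at most $f$ terms on each side, and each retained faulty norm is dominated by a discarded honest norm). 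This reduces the entire adversarial deviation to the norms of the at most $f$ discarded honest stochastic gradients, which genuinely obey Assumption~\ref{asp:bnd_var}.

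With this comparative bound in hand I would decompose $\phi^t = (n-f)\nabla Q(w^t) + (\text{honest noise}) + (\text{adversarial deviation})$. For the cross term I use strong convexity (Assumption~\ref{asp:str_cvx}) with $\nabla Q(w^*)=0$, giving $\iprod{w^t-w^*}{\nabla Q(w^t)}\ge\lambda\norm{w^t-w^*}^2$, and I bound the displaced-gradient magnitudes by Lipschitz continuity (Assumption~\ref{asp:lip}), $\norm{\nabla Q(w^t)}\le\mu\norm{w^t-w^*}$; together these produce the coefficient $2\lambda n - 2f(2\lambda+\mu) = 2(2\lambda+\mu)n\alpha$. For the quadratic term $\eta^2\norm{\phi^t}^2$ I apply the crude Cauchy–Schwarz estimate $\norm{\sum_{j=1}^m a_j}^2\le m\sum_{j}\norm{a_j}^2$ (unavoidable since the selection destroys independence), which separates a deterministic part contributing $(n-f)^2\mu^2$ to the $\eta^2$-coefficient of $\rho$ from a variance part contributing $\eta^2(n-f)^2\sigma^2$ to $\M^2$. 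The noisy part of the adversarial deviation is then handled by Young's inequality with a weight proportional to $n^2\eta^2$: the natural choice dumps the $\norm{w^t-w^*}^2$ piece into the $n^2$ summand of the $\eta^2$-coefficient of $\rho$ and leaves the $\eta$-independent residual $\frac{f^2(1+\sqrt{n-f-1})^2}{n^2}\sigma^2$ in $\M^2$, which explains why this term survives even as $\eta\to0$.

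The main obstacle is concentrated in the adversarial deviation and is twofold. First, because the retained and discarded sets depend on $\zeta^t$, every inequality must be established pathwise before any expectation is taken, and the comparative property is precisely what makes this possible. Second, and most delicate, is extracting the constant $(1+\sqrt{n-f-1})$: I expect this to require bounding the expected norm of a single discarded honest stochastic gradient relative to the honest empirical mean, via a Cauchy–Schwarz estimate over the remaining $n-f-1$ honest agents (yielding the $\sqrt{n-f-1}$) together with a triangle-inequality term for the agent's own centered deviation (the $1+$), and the factor $f$ coming from there being at most $f$ displaced agents. Arranging these constants to assemble exactly into $\M^2$ while simultaneously matching the coefficient $n^2+(n-f)^2\mu^2$ of $\eta^2$ in $\rho$ is the part demanding the most careful bookkeeping.
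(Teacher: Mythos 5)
Your proposal follows essentially the same route as the paper's proof: the same one-step expansion of $\norm{w^{t+1}-w^*}^2$, the same pathwise comparison of retained faulty gradient norms against discarded honest ones (the paper bounds each by the largest honest norm $\norm{g^t_{\nu_t}}$, which is where your data-dependent set $\H\setminus S$ ultimately lands anyway), the same use of strong convexity and Lipschitzness to get the coefficient $n\lambda - f(2\lambda+\mu) = (2\lambda+\mu)n\alpha$, the same crude bound $\norm{\sum_j a_j}^2 \le m \sum_j \norm{a_j}^2$ for the quadratic term, the same Young's inequality with weight $\eta^2 n^2$, and the same tower-property-plus-induction unrolling and vertex-of-the-parabola argument for $\rho\in(0,1)$. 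The one step you leave schematic --- extracting $\sigma\left(1+\sqrt{n-f-1}\right)$ as a bound on the expected largest honest gradient norm --- is precisely what the paper's Lemma~2 supplies by citing the Arnold--Groeneveld order-statistic bound $\E(R_\nu)\le \E(R)+\sqrt{\var(R)(p-1)}$, and the Cauchy--Schwarz-against-the-empirical-mean mechanism you anticipate is the standard proof of that bound, so your plan is correct and matches the paper's.
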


According to Theorem~\ref{thm:conv}, if $\alpha > 0$, i.e., 
\begin{align}
    \frac{f}{n} < \frac{\lambda}{2\lambda + \mu}, \label{eqn:bnd_fn}
\end{align}
then for small enough step-size, in~\eqref{eqn:algo_1}, our algorithm converges {\em linearly} to a neighborhood of a minimum of the global expected loss function~\eqref{eqn:exp_loss}. As $\rho < 1$,~\eqref{eqn:rate} implies that
\[\lim_{t \to \infty} \E_{t-1} \norm{w^{t} - w^*}^2 \leq \frac{\M^2}{ 1- \rho}. \]
Upon using the Markov's inequality, we obtain the probabilistic guarantee on training accuracy stated in Section~\ref{sub:contri}.
\section{Experiments}
\label{sec:exp}

In this section, we present our key empirical results on fault-tolerance in distributed learning on neural networks using the D-SGD method with different gradient-filters. 
The fault-tolerance of different gradient-filters is evaluated through multiple experiments with varied fractions of faulty agents $f/n$, different types of faults and the data batch-size $k$.\\


We use multiple threads to simulate the distributed server-based system (ref. Fig.~\ref{fig:algo}), one for the server and others for agents. The inter-thread communication is handled through message passing interface. The simulator is built in Python using PyTorch~\cite{paszke2019pytorch} and MPI4py~\cite{dalcin2011parallel}, deployed on a Google Cloud Platform cluster 
with 64 vCPUs and 100 GB memory. 

We experiment on the dataset MNIST\cite{bottou1998online}, an image-classification dataset of handwritten digits comprising $60,000$ training and $10,000$ testing samples. We use a state-of-the-art neural network $\mathsf{LeNet}$ with of $431,080$ learning parameters. Thus, the value of dimension $d = 431,080$. 

We simulate a distributed system with $n = 40$ agents, among which $f$ faulty agents are chosen randomly. The server initiates the D-SGD method by choosing the initial estimate $w^0$, a $d$-dimensional vector, by uniform distributions near 0. The step-size is $\eta_t=0.01$ in every iteration $t$. To tolerate faulty agents, the server uses a gradient-filter as shown in Step S2 of Algorithm~\ref{alg:cge}. We compare the fault-tolerance of our CGE gradient-filter with the following prominent gradient-filters.
\begin{itemize}
    \item Geometric median (GeoMed);
    \item Geometric median-of-means (MoM)~\cite{chen2017distributed}, with group size $b = 2$;
    \item Coordinate-wise trimmed mean (CWTM)~\cite{yin2018byzantine};
    \item Multi-KRUM~\cite{blanchard2017machine}, with $m = 5$.
\end{itemize}

{\bf Types of Faults:} We simulate faulty agents that can exhibit two different types of faults listed below. The second one simulates inadvertently faulty agents that exhibit faulty behaviors due to hardware failures~\cite{he2020byzantinerobust}.

\begin{itemize}
    \item \textbf{Gradient-reverse fault}: A faulty agent sends to the server a vector directed opposite to its correct stochastic gradient with the same norm. Specifically, if $s^t_i$ denotes a correct stochastic gradient of faulty agent $i$ in iteration $t$ then agent $i$ sends to the server a vector $g^t_i = - s^t_i$. 
    \item \textbf{Label-flipping fault}: A faulty agent sends incorrect stochastic gradients due to erroneous output labels of its data points.
    Specifically, in our experiments with 10 different labels in MNIST, the original label of a data point $y$ sampled by a faulty agent is changed to $\widetilde{y}=9-y$.
\end{itemize}



\subsection{Results and analysis}

We now present our experimental results with the above setup under various settings specified below. 
We fix the number of faulty agents $f = 8$ out of $n = 40$ total agents, and the data batch-size $k = 64$. We do multiple experimental runs of the D-SGD method with different gradient-filters under different types of faults. The identity of the faulty agents is fixed across all the experiments. We also fix the random seeds used by agents for sampling data in training phase, so that across different experiments the same agent samples the same mini-batch of data for the same iteration. From our experiments, we observe that other than median-of-means, the three gradient-filters, including CGE, have comparable fault-tolerance against the two types of faults exhibited by the faulty agents. Median-of-means cannot tolerate \emph{gradient-reverse} faults, and exhibits a performance gap between other filters when facing \emph{label-flipping} faults. The plots for the losses and accuracies versus the number of iterations (or steps) are shown in Fig.~\ref{fig:fault-types-03} for the different experiments. Table~\ref{tab:run-time} shows the running time for each filter under \emph{label-flipping} faults. CGE has significantly smaller running time while preserving comparable performance.\\

\begin{figure}[tb!]
    \centering
    \includegraphics[width=.95\linewidth]{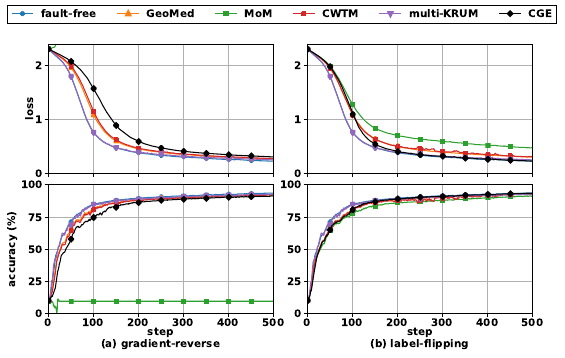}
    \caption{\footnotesize{\it Distributed learning of $\mathsf{LeNet}$ for MNIST using the D-SGD method with different gradient-filters (represented by different colors) to tolerate $f = 8$ faulty agents in the system. The training losses and the testing accuracies evaluated by the server after $0$ to $500$ iterations (or steps) of the different learning algorithms are plotted in the first and the second rows, respectively. Different columns contain the results for different types of faults simulated by the faulty agents. Here, the data batch-size $k=64$.}}
    \label{fig:fault-types-03}
\end{figure}

\begin{table}[tb!]
    \centering
    \caption{\footnotesize{Running time for distributed learning of \textsf{LeNet} for MNIST using D-SGD method with batch-size $k=64$ and the different gradient-filters in presence of $f=8$ faulty agents exhibiting \emph{label-flipping} type of fault, corresponding to the training process shown in Fig.~\ref{fig:fault-types-03}(b). Total running time refers to the time for 500 iterations.}}
    \footnotesize
    \begin{tabular}{c|cc}
        \textbf{Filter} & \textbf{Time per iteration} (s) & \textbf{Total running time} (s) \\
        \hline
        GeoMed & 2.473 & 1236.5 \\
        MoM & 1.150 & 575.1 \\
        CWTM & 0.903 & 451.8 \\
        Multi-KRUM & 2.225 & 1112.4 \\
        CGE & \textbf{0.573} & \textbf{286.3} \\
    \end{tabular}
\label{tab:run-time}
\end{table}

To evaluate the influence of individual agents' data batch-size on the fault-tolerance by CGE gradient-filter, we conduct experiments with four different batch-sizes: $k = 32, \, 64, \, 128$, and $256$. For these experiments all faulty agents exhibit \textit{label-flipping} faults. The average training losses and testing accuracies between the $475$-th and $500$-th iterations (or steps) for different batch-sizes are shown in Fig.~\ref{fig:batch-size-01} where different colors represent different numbers $f$ of faulty agents. A larger batch-size results in stochastic gradients with smaller variances (see~\eqref{eqn:avg_grad} in Section~\ref{sec:algo}), and thus, as expected from our theoretical results in Section~\ref{sec:ft}, the fault-tolerance of CGE gradient-filter improves with increased batch-size. However, an increase in batch-size also increases the costs for computing the stochastic gradients in each iteration. We later present a {\em exponential averaging} technique that reduces the variances of the stochastic gradients, and improves the fault-tolerance of the CGE gradient-filter, without increasing batch-sizes or the costs for computing stochastic gradients.\\

\begin{figure}[tb!]
    \centering
    \includegraphics[width=.95\linewidth]{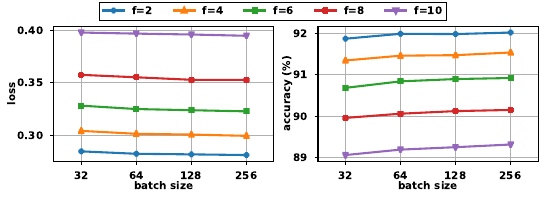}
    \caption{\footnotesize{\it Average training losses and testing accuracies evaluated between $475$ to $500$ steps of the D-SGD method with CGE gradient-filter for distributed learning of $\mathsf{LeNet}$ in the presence of different number of faulty agents $f$ (represented by different colors) each exhibiting the \emph{gradient-reverse} Byzantine fault, for different data batch-sizes.}}
    \label{fig:batch-size-01}
\end{figure}

\begin{figure}[tb!]
    \centering
    \includegraphics[width=.95\linewidth]{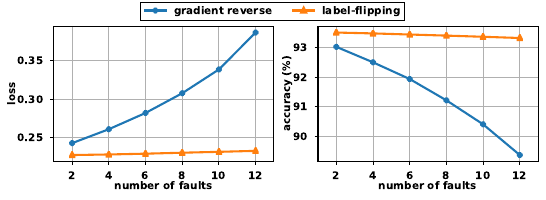}
    \caption{\footnotesize{\it Average training losses and testing accuracies evaluated between $475$ to $500$ steps of the D-SGD method with CGE gradient-filter for distributed learning of $\mathsf{LeNet}$ in the presence of varied number of faulty agents $f$. Different colors represent different types of faults. The data batch-size $k=64$. }}
    \label{fig:num-faults-01}
\end{figure}

Lastly, to evaluate the effect of the fraction of faulty agents $f/n$ on the fault-tolerance of CGE gradient-filter we conduct experiments for different values of $f$. In these experiments we set batch-size $k = 64$. The trend of average losses and accuracies observed between the $475$-th to $500$-th iterations (or steps) is shown in Fig.~\ref{fig:num-faults-01}. As expected from our theoretical results in Section~\ref{sec:ft}, the fault-tolerance of the CGE gradient-filter deteriorates with increase in the fraction of faulty agents.

\subsection{Exponential averaging of stochastic gradients}
\label{sub:appendix_averaging}

We observe from our experiments above, specifically plots in Fig.~\ref{fig:batch-size-01}, that fault-tolerance of CGE gradient-filter improves with increase in batch-size $k$. The reason why this happens is the fact that larger batch-size results in stochastic gradients with smaller variances $\sigma^2$, which, owing to our results in Section~\ref{sec:ft}, results in improved fault-tolerance. However, increase in batch-size also increases the cost of computing stochastic gradients in each iteration. Motivated from this observation, we propose a more economical technique, \textit{exponential averaging}, allowing non-faulty agents to compute stochastic gradients with reduced variances without increasing batch-size and their cost for computing stochastic gradients. Alternately, we may also use other existing variance reduction techniques from the stochastic optimization literature~\cite{bottou2018optimization}.


\begin{figure}[htb!]
    \centering
    \includegraphics[width=.95\linewidth]{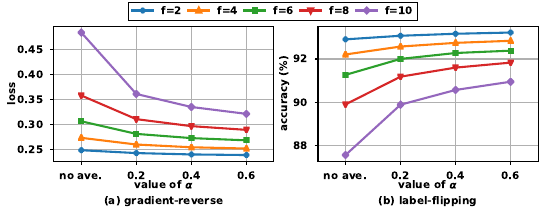}
    \caption{\footnotesize{\it Average training losses and testing accuracies evaluated between $475$ to $500$ steps of the D-SGD method with CGE gradient-filter for distributed learning of $\mathsf{LeNet}$ in the presence of different number of faulty agents $f$ (represented by different colors) when applying \emph{exponential averaging} of stochastic gradients for different values of $\beta$. \emph{no ave.} indicates the case when no averaging is not used. Here, the data batch-size $k = 64$, and the fault type is \emph{norm-confusing}.}}
    \label{fig:averaging-01}
\end{figure}

\begin{figure}[tb!]
    \centering
    \includegraphics[width=.95\linewidth]{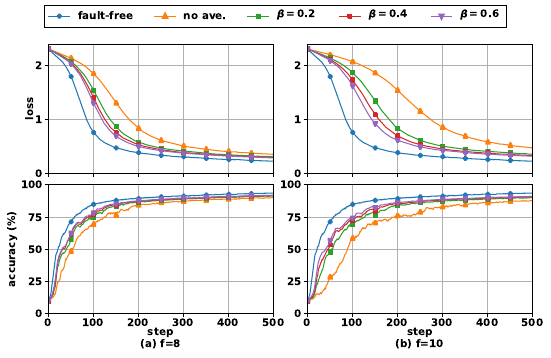}
    \caption{\footnotesize{\it Distributed learning of $\mathsf{LeNet}$ for MNIST using the D-SGD method with batch-size $k=64$ and CGE gradient-filter in the presence of $f = 8$ and $10$ faulty agents exhibiting \emph{norm-confusing} type of faults. \emph{Exponential averaging} of stochastic gradients applied with different values of $\beta$  (represented using different colors).}}
    \label{fig:averaging-detail}
\end{figure}

For each iteration $t$ and agent $i$, the server maintains an {\bf exponentially weighted average} $h^t_i$ of the stochastic gradients received from agent $i$ so far. Specifically, for $\beta \in [0, \, 1)$,
\begin{align}
    h^t_i = \beta\, h^{t-1}_i + (1-\beta) \, g^t_i. \label{eqn:grad_avg_scheme}
\end{align}
where $g^t_i$ denotes the stochastic gradient received by the server from agent $i$ in iteration $t$. Given a set of $n$ vectors $y_1, \ldots, \, y_n$, let $\mathsf{CGE}_f\{y_1, \ldots, \, y_n\}$ denote the output of our CGE gradient-filter defined in step {S2} of Algorithm~\ref{alg:cge}. 
For each iteration $t$, in step {S2} of Algorithm~\ref{alg:cge} the server updates the current estimate $w^t$ to
\(w^{t+1} = w^{t} + \eta_t\cdot\mathsf{CGE}_f\left\{h^t_1, \ldots, \, h^t_n \right\}\).
It should be noted that the above averaging scheme does not increase the per iteration computation cost for an individual agent, unlike the case when we increase the data batch-size.\\



To evaluate the impact of the above exponential averaging on the fault-tolerance of CGE gradient-filter, we introduce a new type of fault designed deliberately against our CGE filter:

\noindent \textbf{Norm-confusing fault}: A faulty agent sends to the server a vector directed opposite to its correct stochastic gradient. However, different form \textit{gradient-reverse} fault, the norm of the vector is scaled to the $(f+1)$-th largest norm amongst the stochastic gradients of all the $n-f$ non-faulty agents. 

Experiments are conducted on the distributed learning of $\mathsf{LeNet}$ for MNIST with $k = 64$ and different values of $f$. The outcomes are shown in Fig.~\ref{fig:averaging-01}. Smaller training loss can be achieved with exponential averaging and larger value of $\beta$ in the same number of steps. Fig.~\ref{fig:averaging-detail} further shows that even if there are large number $f$ of faulty agents and plain CGE converges slowly, exponential averaging can still significantly improve the performance. As shown in Table~\ref{tab:ave-run-time}, exponential averaging does not have a significant impact on running time.

Experiments also show that exponential averaging improves the fault-tolerance of other gradient-filters. For example, Fig.~\ref{fig:averaging-detail-cwtm} shows the outcome of the distributed learning task with $k=64$ and $f=10$, using median-of-means and coordinate-wise trimmed mean filters. With exponential averaging and larger value of $\beta$, faster convergence can be achieved for both filters.

\begin{table}[tb!]
    \centering
    \caption{\footnotesize{Running time for distributed learning of \textsf{LeNet} for MNIST using D-SGD method with batch-size $k=64$ and the different gradient-filters in presence of $f=8$ faulty agents exhibiting \emph{norm-confusing} type of fault with or without applying \emph{exponential averaging} for different values of $\beta$, corresponding to the training process shown in Fig.~\ref{fig:averaging-detail}(a). Total running time refers to the time for 500 iterations.}}
    \footnotesize
    \begin{tabular}{c|cc}
        \textbf{Filter} & \textbf{Time per iteration} (s) & \textbf{Total running time} (s) \\
        \hline
        CGE (No ave.) & 0.613 & 306.9 \\
        CGE ($\beta=0.2$) & 0.733 & 366.7 \\
        CGE ($\beta=0.4$) & 0.731 & 365.5 \\
        CGE ($\beta=0.6$) & 0.733 & 366.7 \\
    \end{tabular}
\label{tab:ave-run-time}
\end{table}

\begin{figure}[tb!]
    \centering
    \includegraphics[width=.95\linewidth]{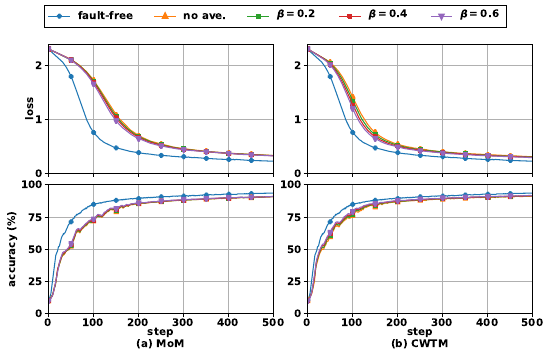}
    \caption{\footnotesize{\it Distributed learning of $\mathsf{LeNet}$ for MNIST dataset using the D-SGD method with batch-size $k=64$ and different gradient-filters in the presence of $f = 10$ faulty agents exhibiting \emph{norm-confusing} type of faults. \emph{Exponential  averaging} of stochastic gradients applied with different values of $\beta$  (represented using different colors). Filters used here are \emph{(a)} median-of-means (\emph{MoM}), and \emph{(b)} coordinate-wise trimmed mean (\emph{CWTM}), the two that has relatively poor performance without exponential averaging.}}
    \label{fig:averaging-detail-cwtm}
\end{figure}





\section{Summary}
In this paper, we have proposed a gradient-filter named comparative gradient elimination (CGE) to confer Byzantine fault-tolerate to distributed learning using the distributed stochastic gradient descent method. We have shown that our algorithm tolerates a bounded fraction of Byzantine faulty agents, under some standard assumptions. We have demonstrated through experiments the applicability of our algorithm to distributed learning of neural networks. We have empirically shown that the fault-tolerance of CGE gradient-filter is comparable to state-of-the-art gradient-filters, namely multi-KRUM, geometric median-of-means, and coordinate-wise trimmed mean gradient-filters. Finally, we have also illustrated the effectiveness of exponential averaging for improving upon the fault-tolerance of any given gradient-filter in the D-SGD method.

\bibliography{ref.bib}
\bibliographystyle{plain}

\section{Appendix: Proof of Theorem~\ref{thm:conv}}
\label{app:pf_thm}

Before we present our proof of the theorem we state below some basic results that are used later in the proof.

\subsection{Useful Observations and Lemmas}
Recall from~\eqref{eqn:def_zeta} in Section~\ref{sec:ft} that for each non-faulty agent $i$, and a deterministic real-valued function $\Psi$,
\begin{align}
    \E_{\zeta^t} \, \Psi\left(g^t_i\right) = \E_{\zeta^t_1, \ldots, \, \zeta^t_n} \, \Psi\left(g^t_i\right). \label{eqn:lem_exp_psi_1}
\end{align}
Also, from~\eqref{eqn:def_zeta_i}, recall that for each non-faulty agent $i$,
\begin{align}
    \zeta_i^t = \z^t_i. \label{eqn:lem_zeta_H}
\end{align}
For a given the current estimate $w^t$, the stochastic gradient $g^t_i$ is a function of data points $\z^t_i$ sampled by the agent $i$. As the non-faulty agents choose their data points independently and identically from distribution $\D$ in each iteration,~\eqref{eqn:lem_zeta_H} implies that for each non-faulty agent $i$,
\begin{align}
    \E_{\zeta^t_1, \ldots, \, \zeta^t_n} \Psi\left(g^t_i\right) = \E_{\z^t_i} \Psi\left(g^t_i\right). \label{eqn:lem_exp_psi_2}
\end{align}
Upon substituting from~\eqref{eqn:lem_exp_psi_2} in~\eqref{eqn:lem_exp_psi_1} we obtain that for each non-faulty agent $i$,
\begin{align}
    \E_{\zeta^t} \Psi\left(g^t_i\right)  = \E_{\z^t_i} \Psi\left(g^t_i\right). \label{eqn:thm_exp_psi_3}
\end{align}
For an arbitrary non-faulty agent $i$ and $t$, $\E_{\zeta^t_i}\left( g^t_i \right) = \E_{\z^t_i}\left( g^t_i \right)$.
Upon substituting $g^t_i$ from~\eqref{eqn:avg_grad} we obtain that
\begin{align}
    \E_{\zeta^t_i}\left( g^t_i \right) = \left(\frac{1}{k}\right) \, \E_{\z^t}\sum_{j = 1}^k \left( \nabla \ell (w^t, \, z^t_{i_j}) \right) \label{eqn:exp_grad_0}
\end{align}
where the gradient of loss function $\ell(\cdot, \, \cdot)$ is with respect to its first argument $w$. From~\eqref{eqn:non-f_data}, recall that $\z^t$ constitutes $k$ data points that are i.i.d.~as per the probability distribution $\D$. Upon using this fact in~\eqref{eqn:exp_grad_0} we obtain that 
\begin{align}
    \E_{\zeta^t_i}\left( g^t_i \right) = \frac{1}{k} \sum_{j = 1}^k \E_{z^t_{i_j} \sim \D} \left( \nabla \ell (w^t, \, z^t_{i_j}) \right), \quad \forall t. \label{eqn:exp_grad_1}
\end{align}
Note that
\begin{align}
    \nabla Q(w) = \E_{z \sim \D} \left( \nabla \ell (w, \, z) \right), \quad \forall w \in \R^d. \label{eqn:nabla_Q}
\end{align}
Substituting from~\eqref{eqn:nabla_Q} in~\eqref{eqn:exp_grad_1} we obtain that an arbitrary non-faulty agent $i$,
\begin{align}
    \E_{\zeta^t_i}\left( g^t_i \right) = \frac{1}{k} \sum_{j = 1}^k \left(\nabla Q(w^t) \right) = \nabla Q(w^t). \label{eqn:exp_grad}
\end{align}

Recall, from Assumption~\ref{asp:bnd_var}, that the variance of each non-faulty agent's stochastic gradient is bounded by $\sigma^2$.

\begin{lemma}
\label{lem:bnd_norm}
For an arbitrary iteration $t$, if Assumption~\ref{asp:bnd_var} holds true then for each non-faulty agent $i$,
\[ \E_{\zeta^t} \norm{ g^t_i }^2 \leq \sigma^2 + \norm{\nabla Q(w^t)}^2.\]
\end{lemma}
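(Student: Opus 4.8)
The plan is to apply the standard bias--variance decomposition of the second moment. For any square-integrable random vector $X$, one has $\E \norm{X}^2 = \E \norm{X - \E X}^2 + \norm{\E X}^2$, since expanding the square and using linearity of expectation makes the cross term $2\,\E \iprod{X - \E X}{\E X}$ vanish. I would apply this identity to $X = g^t_i$ under the single-agent expectation $\E_{\zeta^t_i}$, which gives
\[\E_{\zeta^t_i} \norm{g^t_i}^2 = \E_{\zeta^t_i} \norm{g^t_i - \E_{\zeta^t_i}(g^t_i)}^2 + \norm{\E_{\zeta^t_i}(g^t_i)}^2.\]

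Next, I would bound the two terms on the right separately. The first term is at most $\sigma^2$ directly by Assumption~\ref{asp:bnd_var}, which is precisely the bounded-variance hypothesis. For the second term, I would substitute the already-established unbiasedness identity~\eqref{eqn:exp_grad}, namely $\E_{\zeta^t_i}(g^t_i) = \nabla Q(w^t)$, so that $\norm{\E_{\zeta^t_i}(g^t_i)}^2 = \norm{\nabla Q(w^t)}^2$. Combining the two bounds yields $\E_{\zeta^t_i} \norm{g^t_i}^2 \leq \sigma^2 + \norm{\nabla Q(w^t)}^2$.

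Finally, I would convert this single-agent expectation into the full expectation $\E_{\zeta^t}$ claimed in the statement. Taking $\Psi(\cdot) = \norm{\cdot}^2$, which is a deterministic real-valued function, equation~\eqref{eqn:thm_exp_psi_3} together with~\eqref{eqn:lem_zeta_H} gives $\E_{\zeta^t} \norm{g^t_i}^2 = \E_{\z^t_i} \norm{g^t_i}^2 = \E_{\zeta^t_i} \norm{g^t_i}^2$ for every non-faulty agent $i$, which completes the argument. The computation is routine; the only point requiring care is this last reduction --- verifying that $\norm{\cdot}^2$ legitimately plays the role of $\Psi$ so that the independence of the non-faulty agents' sampling lets us discard the conditioning on the other agents' random variables. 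Everything else is a direct combination of the variance bound and the unbiasedness of the stochastic gradient.
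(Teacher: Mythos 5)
Your proposal is correct and follows essentially the same route as the paper's proof: the bias--variance identity you invoke is exactly the paper's expansion of $\norm{g^t_i - \E(g^t_i)}^2$ followed by taking expectations, combined with the unbiasedness fact~\eqref{eqn:exp_grad}, the variance bound from Assumption~\ref{asp:bnd_var}, and the reduction~\eqref{eqn:thm_exp_psi_3} from $\E_{\zeta^t}$ to the single-agent expectation. The only cosmetic difference is that you perform the whole computation under $\E_{\zeta^t_i}$ and convert to $\E_{\zeta^t}$ at the end, whereas the paper interleaves that conversion mid-argument.
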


\begin{proof}
Let $i$ be an arbitrary non-faulty agent. Using the definition of Euclidean norm, note that for each iteration $t$,
\begin{align}
    \norm{g^t_i - \E_{\zeta^t} \left(g^t_i\right)}^2 = \norm{g^t_i}^2 - 2 \iprod{g^t_i}{\E_{\zeta^t} \left(g^t_i\right)} + \norm{\E_{\zeta^t} \left(g^t_i\right)}^2. \label{eqn:norm_expand}
\end{align}
As the expected value of a constant is the constant itself, upon taking expectations on both sides in~\eqref{eqn:norm_expand} we obtain that
\begin{align}
    \E_{\zeta^t} \norm{g^t_i - \E_{\zeta^t} \left(g^t_i\right)}^2 = \E_{\zeta^t} \norm{g^t_i}^2 - \norm{\E_{\zeta^t} \left(g^t_i\right)}^2. \label{eqn:lem_1_zeta_1}
\end{align}
Note, from~\eqref{eqn:thm_exp_psi_3}, that $\E_{\zeta^t} \left(g^t_i\right) = \E_{\z^t_i} \left(g^t_i\right)$, and $\E_{\zeta^t} \norm{g^t_i - \E_{\zeta^t} \left(g^t_i\right)}^2 = \E_{\z^t_i} \norm{g^t_i - \E_{\z^t_i} \left(g^t_i\right)}^2$.
Substituting these in~\eqref{eqn:lem_1_zeta_1} we obtain that
\begin{align}
    \E_{\z^t_i} \norm{g^t_i - \E_{\z^t_i} \left(g^t_i\right)}^2 = \E_{\zeta^t} \norm{g^t_i}^2 - \norm{\E_{\z^t_i} \left(g^t_i\right)}^2. \label{eqn:lem_1_zeta_2}
\end{align}
Recall from~\eqref{eqn:exp_grad} that $\E_{\z^t_i} ~ ( g^t_i ) = \nabla Q(w^t)$. Substituting this above we obtain that
\begin{align}
   \E_{\z^t_i} \norm{g^t_i - \E_{\z^t_i} \left(g^t_i\right)}^2 = \E_{\zeta^t} \norm{g^t_i}^2 - \norm{\nabla Q(w^t)}^2. \label{eqn:lem_1_zeta_3}
\end{align}
As Assumption~\ref{asp:bnd_var} holds true, $\E_{\z^t_i} \norm{g^t_i - \E_{\z^t_i} \left(g^t_i\right)}^2 \leq \sigma^2$.
Substituting this in~\eqref{eqn:lem_1_zeta_3} proves the lemma.
\end{proof}

Let $\mnorm{\cdot}$ denote the set cardinality. 
Recall that there are at least $n-f$ non-faulty agents. 
We define a set $\H$ constituting of $n-f$ non-faulty agents, i.e., if $i \in \H$ then agent $i$ is non-faulty and $\mnorm{\H} = n-f$. Let $\B = \{1, \ldots, \, n\} \setminus \H$ denote the remaining $f$ agents, some of which may be non-faulty. 

\begin{lemma}
\label{lem:order_stat}
For each iteration $t$, let $\nu_t$ denote the non-faulty agent in $\H$ with stochastic gradient $g^t_{\nu_t}$ of largest Euclidean norm, that is,
\[\norm{g^t_{\nu_t}} \geq \norm{g^t_i}, \quad \forall i \in \H.\]
For an arbitrary iteration $t$, if Assumption~\ref{asp:bnd_var} holds true then
\begin{align}
    \E_{\zeta^t} \, \norm{g^t_{\nu_t}} \leq \sigma \left( 1 + \sqrt{n-f-1} \right) + \norm{\nabla Q(w^t)}. \label{eqn:order_exp}
\end{align}
\end{lemma}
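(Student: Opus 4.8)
The plan is to reduce the expected largest non-faulty gradient norm to a bound on the expected maximum of $n-f$ i.i.d.\ nonnegative random variables with bounded second moment. First I would recenter each non-faulty gradient about its mean: since $\E_{\z^t_i}(g^t_i) = \nabla Q(w^t)$ by~\eqref{eqn:exp_grad}, the triangle inequality gives $\norm{g^t_i} \leq \norm{g^t_i - \nabla Q(w^t)} + \norm{\nabla Q(w^t)}$ for every $i \in \H$. Writing $X_i = \norm{g^t_i - \nabla Q(w^t)}$ and taking the maximum over $i \in \H$ (recall $\norm{g^t_{\nu_t}} = \max_{i \in \H}\norm{g^t_i}$), I obtain $\norm{g^t_{\nu_t}} \leq \max_{i \in \H} X_i + \norm{\nabla Q(w^t)}$, so after applying $\E_{\zeta^t}$ it suffices to show $\E_{\zeta^t}[\max_{i\in\H} X_i] \leq \sigma(1 + \sqrt{n-f-1})$.

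Treating $w^t$ as given, so that, as in~\eqref{eqn:thm_exp_psi_3}, the relevant randomness is only the time-$t$ data of the agents in $\H$, the variables $\{X_i\}_{i\in\H}$ are i.i.d.\ and nonnegative, because the $n-f$ non-faulty agents in $\H$ sample their data i.i.d.\ from $\D$; moreover $\E X_i^2 = \E\norm{g^t_i - \E(g^t_i)}^2 \leq \sigma^2$ by Assumption~\ref{asp:bnd_var}. The key step is a centering inequality: letting $\bar X = \frac{1}{n-f}\sum_{i\in\H} X_i$ and noting that the deviations $X_i - \bar X$ sum to zero, the largest deviation is nonnegative and hence dominated by the Euclidean norm of the deviation vector, giving $\max_{i\in\H} X_i \leq \bar X + \sqrt{\sum_{i\in\H}(X_i - \bar X)^2}$.

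Next I would take expectations and push the expectation inside the square root by Jensen's inequality (concavity of $\sqrt{\cdot}$), yielding $\E[\max_{i\in\H} X_i] \leq \E[\bar X] + \sqrt{\E\sum_{i\in\H}(X_i-\bar X)^2}$. For the first term, $\E[\bar X] = \E X_1 \leq \sqrt{\E X_1^2} \leq \sigma$. For the second, I would use the variance identity $\E\sum_{i\in\H}(X_i - \bar X)^2 = (n-f-1)\,\var(X_1)$, which follows from independence since the off-diagonal cross terms cancel, and then bound $\var(X_1) \leq \E X_1^2 \leq \sigma^2$. Combining the two pieces gives $\E[\max_{i\in\H}X_i] \leq \sigma + \sqrt{(n-f-1)\sigma^2} = \sigma(1 + \sqrt{n-f-1})$, which together with the first paragraph establishes~\eqref{eqn:order_exp}.

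I expect the main obstacle to be obtaining the sharp constant $\sqrt{n-f-1}$ rather than the looser $\sqrt{n-f}$. A naive maximum bound such as $\max_{i\in\H} X_i \leq \sqrt{\sum_{i\in\H} X_i^2}$ would only yield $\sqrt{n-f}\,\sigma$; the improvement to $n-f-1$ hinges on recentering about the sample mean so that one degree of freedom is removed, which is precisely the source of the $(n-f-1)$ factor in the variance identity. Care is also needed to justify that $\E_{\zeta^t}$ collapses to an expectation over only the non-faulty agents' time-$t$ data with $w^t$ held fixed, so that the independence and identical distribution of the $X_i$ can be legitimately invoked.
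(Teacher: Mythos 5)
Your proof is correct, and it arrives at exactly the bound in~\eqref{eqn:order_exp}, but it takes a somewhat different route from the paper. The paper invokes the Arnold--Groeneveld order-statistic bound $\E(R_\nu) \leq \E(R) + \sqrt{\var(R)(p-1)}$ as a black box, applies it directly to the variables $R_i = \norm{g^t_i}$ for $i \in \H$, and then spends most of its effort bounding $\E_{\z^t_i}\norm{g^t_i} \leq \sigma + \norm{\nabla Q(w^t)}$ and $\var(\norm{g^t_i}) \leq \sigma^2$ via two separate applications of Jensen's inequality together with Lemma~\ref{lem:bnd_norm}. You instead (a) recenter first, writing $\norm{g^t_i} \leq \norm{g^t_i - \nabla Q(w^t)} + \norm{\nabla Q(w^t)}$ so that the drift term $\norm{\nabla Q(w^t)}$ is peeled off before any maximum is taken, and (b) re-derive the order-statistic bound from first principles via $\max_i X_i \leq \bar X + \bigl(\sum_{i}(X_i-\bar X)^2\bigr)^{1/2}$, Jensen, and the identity $\E\sum_i(X_i-\bar X)^2 = (n-f-1)\var(X_1)$. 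All of these steps check out (the deviation-vector bound is valid because $\max_i v_i \leq \norm{v}$ for any real vector, and the variance identity needs only pairwise independence of the non-faulty agents' samples, which holds). What your version buys is self-containedness --- no citation is needed, and your final remark correctly identifies that the $(n-f-1)$ factor comes from recentering about the sample mean --- at the cost of requiring independence, whereas the cited Arnold--Groeneveld bound (proved via Samuelson's inequality, i.e.\ with the sharper factor $\sqrt{(p-1)/p}$ in the deterministic step) holds for arbitrarily dependent identically distributed variables. Your recentering also mildly streamlines the argument: you only need $\E X_i \leq \sigma$ and $\var(X_i) \leq \E X_i^2 \leq \sigma^2$, avoiding the paper's separate derivation of $\E\norm{g^t_i} \geq \norm{\nabla Q(w^t)}$ in~\eqref{eqn:jen_norm_2}.
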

\begin{proof}
We begin our proof by reviewing a generic result on the upper bounds on the expectation of highest order statistic~\cite{arnold1979bounds, bertsimas2006tight}. For a positive finite integer $p$, let $R_1, \ldots, \, R_p$ be $p$ independent real-valued random variables. Consider a random variable $R_\nu = \max\{R_1, \ldots, \, R_p\}$.
Let $\E(\cdot)$ denote the mean value of a random variable. If the mean and the variance of the random variables $R_1, \ldots, \, R_p$ are identically equal to $\E(R)$ and $\var(R)$, respectively, then (see~\cite{arnold1979bounds})
\begin{align}
    \E(R_\nu) \leq \E(R) + \sqrt{\var(R) \left(p-1\right)}. \label{eqn:arnold_order_bnd}
\end{align}
~

Consider an arbitrary iteration $t$. Recall that $\H$ comprises of only non-faulty agents, specifically $n-f$ non-faulty agents. Thus, from~\eqref{eqn:thm_exp_psi_3} we obtain that, for all $i \in \H$,
\begin{align}
    \E_{\zeta^t} \norm{g^t_i}  &= \E_{\z^t_i} \norm{g^t_i}, \text{and} \label{eqn:lem_exp_t_3} \\
    \E_{\zeta^t}\left(\norm{g^t_i} -  \E_{\zeta^t}\norm{g^t_i} \right)^2 &= \E_{\z^t_i }\left( \norm{g^t_i} - \E_{\z^t_i}\norm{g^t_i} \right)^2. \label{eqn:lem_var_t_1}
\end{align}
Now, recall from the definition of $\nu_t$ and $g^t_{\nu_t}$ that $\norm{g^t_{\nu_t}}$ is a real-valued random variable such that
\[ \norm{g^t_{\nu_t}} = \max\left\{\norm{g^t_i}, ~ i \in \H\right\}.\]
Therefore, substituting from~\eqref{eqn:lem_exp_t_3} and~\eqref{eqn:lem_var_t_1} in~\eqref{eqn:arnold_order_bnd}, we obtain that for an arbitrary agent $i \in \H$,
\begin{align}
    \E_{\zeta^t} \, \norm{g^t_{\nu_t}} & \leq \E_{\z^t_i}\norm{g^t_i} \nonumber \\
    & + \sqrt{\E_{\z^t_i}\left(\norm{g^t_i} -  \E_{\z^t_i}\norm{g^t_i} \right)^2 \left( n-f-1\right)}. \label{eqn:lem_bnd_nu_1}
\end{align}
Owing to Jensen's inequality~\cite{boyd2004convex}, for all $i \in \H$,
\begin{align}
    \E_{\z^t_i} \, \norm{g^t_i} = \E_{\z^t_i} \, \sqrt{\norm{g^t_i}^2} \leq \sqrt{\E_{\z^t_i} \norm{g^t_i}^2}. \label{eqn:jen_1}
\end{align}
As Assumption~\ref{asp:bnd_var} holds true, from Lemma~\ref{lem:bnd_norm} we obtain that
\begin{align}
    \E_{\z^t_i} \, \norm{g^t_i}^2 \leq \sigma^2 + \norm{\nabla Q(w^t)}^2, \quad \forall i \in \H. \label{eqn:hon_bnd_1}
\end{align}
Substituting from~\eqref{eqn:hon_bnd_1} in~\eqref{eqn:jen_1} we obtain that
\begin{align}
    \E_{\z^t_i } \, \norm{g^t_i} \leq \sqrt{\sigma^2 + \norm{\nabla Q(w^t)}^2} ~ , \quad \forall i \in \H. \label{eqn:exp_bnd}
\end{align}
From triangle inequality, $\sqrt{\sigma^2 + \norm{\nabla Q(w^t)}^2} \leq \sigma + \norm{\nabla Q(w^t)}$. Substituting this in~\eqref{eqn:exp_bnd} proves that
\begin{align}
    \E_{\z^t_i} \, \norm{g^t_i} \leq \sigma + \norm{\nabla Q(w^t)} ~ , \quad \forall i \in \H. \label{eqn:exp_bnd_2}
\end{align}

Now, note that for all $i$,
\begin{align}
    \E_{\z^t_i}\left(\norm{g^t_i} -  \E_{\z^t_i }\norm{g^t_i} \right)^2 = \E_{\z^t_i } \norm{g^t_i}^2 - \left( \E_{\z^t_i } \norm{g^t_i} \right)^2. \label{eqn:bnd_basic_var}
\end{align}
As the Euclidean norm $\norm{\cdot}$ is a convex function~\cite{boyd2004convex}, Jensen's inequality implies that
\begin{align}
    \E_{\z^t_i } \norm{g^t_i} \geq \norm{\E_{\z^t_i }\left(g^t_i\right)}, \quad \forall i \in \H. \label{eqn:jen_norm_1}
\end{align}
Recall from~\eqref{eqn:exp_grad} that $\E_{\z^t_i }\left(g^t_i\right) = \nabla Q(w^t)$ for all $i \in \H$.
Upon substituting this in~\eqref{eqn:jen_norm_1} we obtain that
\begin{align}
    \E_{\z^t_i } \norm{g^t_i} \geq \norm{\nabla Q(w^t)}, \quad \forall i \in \H. \label{eqn:jen_norm_2}
\end{align}
Substituting from~\eqref{eqn:jen_norm_2} in~\eqref{eqn:bnd_basic_var} we obtain that for all $i \in \H$,
\begin{align}
    \E_{\z^t_i }\left(\norm{g^t_i} -  \E_{\z^t_i }\norm{g^t_i} \right)^2 \leq \E_{\z^t_i } \norm{g^t_i}^2 - \norm{\nabla Q(w^t)}^2. \label{eqn:bnd_basic_var_2}
\end{align}
Substituting from~\eqref{eqn:hon_bnd_1} in~\eqref{eqn:bnd_basic_var_2} above we obtain that
\begin{align}
    \E_{\z^t_i}\left(\norm{g^t_i} -  \E_{\z^t_i}\norm{g^t_i} \right)^2 \leq \sigma^2  ~ , \quad \forall i \in \H. \label{eqn:exp_bnd_var_1}
\end{align}
Substituting from~\eqref{eqn:exp_bnd_2} and~\eqref{eqn:exp_bnd_var_1} in~\eqref{eqn:lem_bnd_nu_1} proves the lemma.
\end{proof}


\subsection{Proof of~\eqref{eqn:rho} in Theorem~\ref{thm:conv}}
\label{sub:rho}
In this subsection, we prove~\eqref{eqn:rho}, i.e., 
\[\rho  = 1 - \left(n^2 + (n-f)^2 \mu^2 \right) \, \eta \left( \overline{\eta} - \eta \right) \in (0, \, 1)\]
where $0 < \eta < \overline{\eta}$. As $\eta \left( \overline{\eta} - \eta \right) > 0$, obviously, $\rho < 1$. 
As $\eta \in (0, \, \overline{\eta})$, upon substituting $\eta = \overline{\eta} - \delta$, where $\delta \in (0, \, \overline{\eta})$, in~\eqref{eqn:rho} we obtain that
\begin{align}
    & \rho = 1 - \left(n^2 + (n-f)^2 \mu^2 \right) \, (\overline{\eta} - \delta) \delta \label{eqn:rho_exp_1} \\
    & = 1 + \left(n^2 + (n-f)^2 \mu^2 \right) \delta^2 -  \overline{\eta} \left(n^2 + (n-f)^2 \mu^2 \right) \delta = 1 \nonumber \\
    & + \left(n^2 + (n-f)^2 \mu^2 \right) \left( \delta - \frac{\overline{\eta}}{2}\right)^2 - \frac{\overline{\eta}^2 \left(n^2 + (n-f)^2 \mu^2 \right)}{4}. \nonumber
\end{align}
As $\left( \delta - \frac{\overline{\eta}}{2}\right)^2  \geq 0$,~\eqref{eqn:rho_exp_1} implies that
\begin{align}
    \rho \geq 1 - \frac{\overline{\eta}^2 \left(n^2 + (n-f)^2 \mu^2 \right)}{4}. \label{eqn:rho_exp_2}
\end{align}
Substituting $\overline{\eta}$ from~\eqref{eqn:step_learn} in~\eqref{eqn:rho_exp_2} we obtain that
\begin{align}
    \rho \geq 1 - \frac{(2\lambda + \mu)^2 n^2 \alpha^2}{ \left(n^2 + (n-f)^2 \mu^2 \right)}. \label{eqn:rho_exp_2}
\end{align}
Recall, from~\eqref{eqn:alpha_learn}, that
\begin{align}
    \alpha = \frac{\lambda n - f(2 \lambda + \mu)}{n(2 \lambda + \mu)}. \label{eqn:alpha_alt}
\end{align}
Substituting from above in~\eqref{eqn:rho_exp_2} we obtain that 
\begin{align}
    \rho \geq 1 - \frac{\left(\lambda n - f(2 \lambda + \mu)\right)^2}{\left(n^2 + (n-f)^2 \mu^2 \right)} = 1 - \frac{\left((n-f) \lambda - f(\lambda + \mu)\right)^2}{\left(n^2 + (n-f)^2 \mu^2 \right)}.\label{eqn:rho_exp_3}
\end{align}
As $\alpha > 0$, $(n-f) \lambda - f(\lambda + \mu) > 0$. Thus, 
\begin{align}
    \left( (n-f) \lambda - f(\lambda + \mu) \right)^2 \leq (n-f)^2 \lambda^2. \label{eqn:alpha_ineq_3}
\end{align}
Substituting from~\eqref{eqn:alpha_ineq_3} in~\eqref{eqn:rho_exp_3} we obtain that
\begin{align}
    \rho \geq 1 - \frac{ (n-f)^2 \lambda^2}{\left(n^2 + (n-f)^2 \mu^2 \right)}.\label{eqn:rho_exp_4}
\end{align}

Now, consider a minimum point $w^*$ of the expected loss function $Q(w)$, and an arbitrary finite $w \in \R^d$. Note that $\nabla Q(w^*) = 0$. Thus, Assumption~\ref{asp:lip} implies that
\begin{align}
    \norm{\nabla Q(w)} \leq \mu \norm{w - w^*}. \label{eqn:asm_1}
\end{align}
Now, under Assumption~\ref{asp:str_cvx},
\begin{align}
    \iprod{w - w^*}{\nabla Q(w)} \geq \lambda \norm{w - w^*}^2. \label{eqn:asm_2}
\end{align}
Due to Cauchy-Schwartz inequality, $\iprod{w - w^*}{\nabla Q(w)} \leq \norm{w - w^*} \norm{\nabla Q(w)}$.
Thus,~\eqref{eqn:asm_2} implies that
\begin{align}
    \norm{Q(w)} \geq \lambda \norm{w - w^*}.\label{eqn:asm_3}
\end{align}
From~\eqref{eqn:asm_1} and~\eqref{eqn:asm_3} we obtain that $\lambda \leq \mu$. Upon using this inequality in~\eqref{eqn:rho_exp_4} we obtain that 
\begin{align}
    \rho \geq 1 - \frac{ (n-f)^2 \mu^2}{\left(n^2 + (n-f)^2 \mu^2 \right)}.\label{eqn:rho_exp_5}
\end{align}
As $(n-f)^2 \mu^2 < n^2 + (n-f)^2 \mu^2$,~\eqref{eqn:rho_exp_5} implies that $\rho > 0$.

\subsection{Proof of~\eqref{eqn:rate} in Theorem~\ref{thm:conv}}
\label{sub:rate}
In this section, we prove~\eqref{eqn:rate}, i.e, for all $t \geq 0$,
\begin{align*}
    \E_{t} \norm{w^{t+1} - w^*}^2 \leq \rho^{t+1} \norm{w^{0} - w^*}^2 + \left(\frac{1 - \rho^{t+1}}{ 1- \rho}\right) \M^2.
\end{align*}
~

Consider an arbitrary iteration $t$. 
Recall from~\eqref{eqn:order} in Algorithm~\ref{alg:cge} that the stochastic gradient with the $j$-th smallest norm, $g^t_{i_j}$, is sent by agent $i_j$ where $j \in \{1, \ldots, \, n\}$. Let 
\begin{align}
    \g^t = \sum_{j \in \{i_1, \ldots, \, i_{n-f}\}} g^t_{j} \label{eqn:notation_g}
\end{align}
denote the aggregate of the $n-f$ stochastic gradients received by the server with the $n-f$ smallest Euclidean norms. Upon substituting $\eta_t = \eta$, and $\g^t$ from~\eqref{eqn:notation_g}, in~\eqref{eqn:algo_1} we obtain that
\begin{align}
    w^{t+1} = w^t - \eta \, \g^t, \quad \forall t. \label{eqn:algo_1_2}
\end{align}
Thus, from the definition of Euclidean norm, 
\begin{align}
    \norm{w^{t+1} - w^*}^2 = \norm{w^{t} - w^*}^2 - 2 \eta \iprod{w^t - w^*}{ \g^t} + \eta^2 \norm{\g^t}^2. \label{eqn:no_exp_1}
\end{align}
Now, owing to the triangle inequality, we obtain that
\begin{align}
    \norm{\g^t} \leq \sum_{j \in \{i_1, \ldots, \, i_{n-f}\}} \norm{g^t_j}. \label{eqn:triangle_notation_g_0}
\end{align}
Recall that $\H$ denotes a set comprising $n-f$ non-faulty agents, i.e., $\mnorm{\H} = n-f$, and $\{i_1, \ldots, \, i_{n-f}\}$ represents the agents that sent stochastic gradients with smallest $n-f$ norm. Thus,
\begin{align}
    \sum_{j \in \{i_1, \ldots, \, i_{n-f}\}} \norm{g^t_j} \leq \sum_{j \in \H} \norm{g^t_j}. \label{eqn:honest_filter_sum}
\end{align}
Substituting from~\eqref{eqn:honest_filter_sum} in~\eqref{eqn:triangle_notation_g_0} we obtain that 
\begin{align}
    \norm{\g^t} \leq \sum_{j \in \H} \norm{g^t_j}. \label{eqn:triangle_notation_g}
\end{align}
Thus, $\norm{\g^t}^2 \leq \mnorm{\H} \sum_{j \in \H} \norm{g^t_j}^2 = (n-f) \sum_{j \in \H} \norm{g^t_j}^2$.
Substituting this in~\eqref{eqn:no_exp_1} implies that
\begin{align}
    \norm{w^{t+1} - w^*}^2 & \leq \norm{w^{t} - w^*}^2 - 2 \eta \iprod{w^t - w^*}{ \g^t} \nonumber \\
    & + \eta^2 \, (n-f) \sum_{j \in \H} \norm{g^t_j}^2. \label{eqn:no_exp_2}
\end{align}
Let $\H^t = \{i_1, \ldots, \, i_{n-f}\} \cap \H$ and $\B^t = \{i_1, \ldots, \, i_{n-f}\} \setminus \H^t$. Note that
\begin{align}
    \mnorm{\H^t} \geq \mnorm{\H}-f = n-2f, ~ \text{ and }~ \mnorm{\B^t} \leq f.\label{eqn:card_h}
\end{align}
Therefore, recalling from~\eqref{eqn:notation_g},
\begin{align}
    \g^t = \sum_{i \in \H^t}g^t_i + \sum_{j \in \B^t} g^t_j.
\end{align}
Thus,
\begin{align}
    \iprod{w^t - w^*}{ \g^t} = \sum_{i \in \H^t}\iprod{w^t - w^*}{g^t_i} + \sum_{j \in \B^t} \iprod{w^t - w^*}{ g^t_j }.  \label{eqn:no_exp_phi_1}
\end{align}
Owing to Cauchy-Schwartz inequality, $\forall j$,
\begin{align}
    \iprod{w^t - w^*}{g^t_j} \geq - \norm{w^t - w^*} \, \norm{g^t_j}. \label{eqn:no_exp_cs_1}
\end{align}
As in Lemma~\ref{lem:order_stat}, let $\nu_t$ denote the non-faulty agent in set $\H$ having stochastic gradient with the largest norm in iteration $t$. Thus, $\norm{g^t_{i_{n-f}}} \leq \norm{g^t_{\nu_t}}$, and   
\begin{align}
    \norm{g^t_j} \leq \norm{g^t_{\nu_t}}, \quad \forall j \in \B^t. \label{eqn:limit_fault}
\end{align}
Substituting from~\eqref{eqn:limit_fault} in~\eqref{eqn:no_exp_cs_1} we obtain that
\begin{align}
    \iprod{w^t - w^*}{g^t_j} \geq - \norm{w^t - w^*} \, \norm{g^t_{\nu_t}}. \label{eqn:no_exp_cs_2}
\end{align}
Upon substituting from~\eqref{eqn:no_exp_cs_2} in~\eqref{eqn:no_exp_phi_1} we obtain that
\begin{align*}
    \iprod{w^t - w^*}{ \g^t} \geq \sum_{i \in \H^t}\iprod{w^t - w^*}{g^t_i} - \sum_{j \in \B^t} \norm{w^t - w^*} \, \norm{g^t_{\nu_t}}. 
\end{align*}
As $\mnorm{\B^t} \leq f$ (see~\eqref{eqn:card_h}), from above we obtain that 
\begin{align}
    \iprod{w^t - w^*}{ \g^t} \geq \sum_{i \in \H^t}\iprod{w^t - w^*}{g^t_i} - f \norm{w^t - w^*} \, \norm{g^t_{\nu_t}}.  \label{eqn:no_exp_phi_3}
\end{align}
We define, for all $t$,
\begin{align}
    \phi_t = \sum_{i \in \H^t}\iprod{w^t - w^*}{g^t_i} - f \norm{w^t - w^*} \, \norm{g^t_{\nu_t}}. \label{eqn:new_phi_t}
\end{align}
Upon substituting from~\eqref{eqn:new_phi_t} in~\eqref{eqn:no_exp_phi_3} we get $\iprod{w^t - w^*}{ \g^t} \geq \phi_t$.
Substituting this in~\eqref{eqn:no_exp_2} we obtain that
\begin{align}
    \norm{w^{t+1} - w^*}^2 & \leq \norm{w^{t} - w^*}^2 - 2 \eta \, \phi_t \nonumber \\
    & + \eta^2 \, (n-f) \sum_{j \in \H} \norm{g^t_j}^2. \label{eqn:no_exp_3}
\end{align}
Recall the definition of random variable $\zeta^t$ from~\eqref{eqn:def_zeta}. Taking the expectation $\E_{\zeta^t}$ on both sides in~\eqref{eqn:no_exp_3}, and using the fact that $\E_{\zeta^t}\norm{w^{t} - w^*}^2 = \norm{w^{t} - w^*}^2$, implies that
\begin{align}
    \E_{\zeta^t}\norm{w^{t+1} - w^*}^2 & \leq \norm{w^{t} - w^*}^2 - 2 \eta \, \E_{\zeta^t} \left(\phi_t\right) \nonumber \\
    & + \eta^2 \, (n-f) \sum_{j \in \H} \E_{\zeta^t} \norm{g^t_j}^2. \label{eqn:no_exp_4}
\end{align}
Taking expectation $\E_{\zeta^t}$ on both sides in~\eqref{eqn:new_phi_t} implies that
\begin{align}
    \E_{\zeta^t} \left(\phi_t \right) = & \sum_{i \in \H^t}\iprod{w^t - w^*}{ \E_{\zeta^t}\left(g^t_i \right)} \nonumber \\
    & - f \norm{w^t - w^*} \, \E_{\zeta^t} \norm{g^t_{\nu_t}}. \label{eqn:exp_new_phi}
\end{align}
As $\E_{\zeta^t}\left(g^t_i \right)  = \nabla Q(w^t)$ for all $i \in \H$,~\eqref{eqn:exp_new_phi} implies that
\begin{align}
    \E_{\zeta^t} \left(\phi_t \right) & = \sum_{i \in \H^t}\iprod{w^t - w^*}{ \nabla Q(w^t)} \nonumber \\
    & - f \norm{w^t - w^*} \, \E_{\zeta^t} \norm{g^t_{\nu_t}}. \label{eqn:phi_2}
\end{align}
As $w^*$ is a minimum of $Q(w)$, $\nabla Q(w^*) = 0$. Thus, Assumption~\ref{asp:str_cvx}, i.e., strong convexity of function $Q(w)$, implies that
\begin{align}
    \iprod{w^t - w^*}{ \nabla Q(w^t)} \geq \lambda \norm{w^t - w^*}^2. \label{eqn:due_str_cvx}
\end{align}
Substituting from~\eqref{eqn:due_str_cvx} in~\eqref{eqn:phi_2} we obtain that
\begin{align}
    \E_{\zeta^t} \left(\phi_t \right) \geq \mnorm{\H^t} \, \lambda \norm{w^t - w^*}^2 - f \norm{w^t - w^*} \, \E_{\zeta^t} \norm{g^t_{\nu_t}}. \label{eqn:phi_3}
\end{align}
From Lemma~\ref{lem:order_stat}, $\E_{\zeta^t} \norm{g^t_{\nu_t}} \leq \sigma \left( 1 + \sqrt{n-f-1}\right) + \norm{\nabla Q(w^t)}$.
Substituting this in~\eqref{eqn:phi_3}, and using the fact that that $\mnorm{\H^t} \geq n-2f$ (see~\eqref{eqn:card_h}), we obtain that
\begin{align}
    \E_{\zeta^t} \left(\phi_t \right) & \geq \left( n \lambda - f(2 \lambda + \mu)\right) \norm{w^t - w^*}^2 \nonumber \\
    & - f \sigma\left( 1 + \sqrt{n-f-1}\right) \norm{w^t - w^*}. \label{eqn:main_bnd_phi}
\end{align}
Now, owing to Lemma~\ref{lem:bnd_norm}, $\E_{\zeta^t} \norm{g^t_j}^2 \leq \sigma^2 + \norm{\nabla Q(w^t)}^2$ for all $j \in \H$.
Recall that $\mnorm{\H} = n-f$. Thus,
\begin{align}
    \sum_{j \in \H} \E_{\zeta^t} \norm{g^t_j}^2 & \leq \mnorm{\H}\left( \sigma^2 + \norm{\nabla Q(w^t)}^2 \right) \nonumber \\
    & = (n-f)\left( \sigma^2 + \norm{\nabla Q(w^t)}^2 \right). \label{eqn:no_exp_sum_bnd_0}
\end{align}
As $\nabla Q(w^*) = 0$, Assumption~\ref{asp:lip} (i.e., Lipschitzness) implies that $\norm{\nabla Q(w^t)} \leq \mu \norm{w^t - w^*}$. Thus,~\eqref{eqn:no_exp_sum_bnd_0} implies that
\begin{align}
    \sum_{j \in \H} \E_{\zeta^t} \norm{g^t_j}^2 \leq  (n-f)\left( \sigma^2 + \mu^2 \norm{w^t - w^*}^2 \right). \label{eqn:no_exp_sum_bnd}
\end{align}
Finally, substituting from~\eqref{eqn:main_bnd_phi} and~\eqref{eqn:no_exp_sum_bnd} in~\eqref{eqn:no_exp_4} implies that
\begin{align}
    & \E_{\zeta^t} \norm{w^{t+1} - w^*}^2  \label{eqn:new_in} \\
    & \leq \left( 1 - 2 \eta \left( n \lambda - f(2 \lambda + \mu)\right) + \eta^2 (n-f)^2 \mu^2 \right) \norm{w^t - w^*}^2 \nonumber \\
    & + 2 \eta f \sigma\left( 1 + \sqrt{n-f-1}\right) \norm{w^t - w^*} + \eta^2 (n-f)^2 \sigma^2. \nonumber
\end{align}
For real values $a$ and $b$, $ 2 a b \leq a^2 + b^2$. Thus,
\begin{align}
    & 2 \eta f \sigma\left( 1 + \sqrt{n-f-1}\right) \norm{w^t - w^*} \leq \eta^2 n^2 \norm{w^t - w^*}^2 \nonumber \\
    & + \left( \frac{f}{n}\right)^2 \, \sigma^2\left( 1 + \sqrt{n-f-1}\right)^2. \label{eqn:basic_ineq_1}
\end{align}
Substituting from~\eqref{eqn:basic_ineq_1} in~\eqref{eqn:new_in} we obtain that
\begin{align*}
    & \E_{\zeta^t} \norm{w^{t+1} - w^*}^2 \leq \left\{ 1 - 2 \eta \left( n \lambda - f(2 \lambda + \mu)\right) \right\} \norm{w^t - w^*}^2 \\
    & + \eta^2 \left( n^2 + (n-f)^2 \mu^2 \right) \norm{w^t - w^*}^2 \\
    & + \left(\frac{f^2 \, \left( 1 + \sqrt{n-f-1}\right)^2}{n^2} + \eta^2 (n-f)^2 \right) \sigma^2. 
\end{align*}
Substituting $\M^2$ from~\eqref{eqn:ss_error} above we obtain that
\begin{align}
    & \E_{\zeta^t} \norm{w^{t+1} - w^*}^2 \leq \left( 1 - 2 \eta \left( n \lambda - f(2 \lambda + \mu)\right)\right) \norm{w^t - w^*}^2 \nonumber \\
    & + \eta^2 \left( n^2 + (n-f)^2 \mu^2 \right) \norm{w^t - w^*}^2 + \M^2. \label{eqn:growth_6}
\end{align}
Substituting $\alpha$ from~\eqref{eqn:alpha_learn}, we obtain that
\begin{align}
    n \lambda - f(2 \lambda + \mu) = (2 \lambda + \mu) n  \, \alpha. \label{eqn:alpha_morph}
\end{align}
Therefore, 
\begin{align*}
    & 2 \eta \left( n \lambda - f(2 \lambda + \mu)\right) - \eta^2 \left( n^2 + (n-f)^2 \mu^2 \right) \\
    & = 2 \eta (2 \lambda + \mu) n \alpha - \eta^2 \left( n^2 + (n-f)^2 \mu^2 \right) \\
    & = \left( n^2 + (n-f)^2 \mu^2 \right) \,  \eta  \left( \left(\frac{2 (2 \lambda + \mu) n}{n^2 + (n-f)^2 \mu^2} \right)\alpha - \eta \right).
\end{align*}
Substituting $\overline{\eta}$ from~\eqref{eqn:step_learn} above we obtain that
\begin{align}
    & 2 \eta \left( n \lambda - f(2 \lambda + \mu)\right) - \eta^2 \left( n^2 + (n-f)^2 \mu^2 \right) \nonumber \\
    & = \left( n^2 + (n-f)^2 \mu^2 \right) \eta \left( \overline{\eta} - \eta \right). \label{eqn:rate_1}
\end{align}
Substituting $\rho$ from~\eqref{eqn:rho} in~\eqref{eqn:rate_1} we obtain that
\begin{align}
    2 \eta \left( n \lambda - f(2 \lambda + \mu)\right) - \eta^2 \left( n^2 + (n-f)^2 \mu^2 \right) = 1 - \rho. \label{eqn:rate_2}
\end{align}
Substituting from~\eqref{eqn:rate_2} in~\eqref{eqn:growth_6} we obtain that
\begin{align}
    \E_{\zeta^t} \norm{w^{t+1} - w^*}^2 \leq \rho \norm{w^t - w^*}^2  + \M^2. \label{eqn:growth_7}
\end{align}
Recall from~\eqref{eqn:notation_exp} that $\E_0 = \E_{\zeta^0}$. Thus, the above proves the theorem for $t = 0$, i.e., 
\begin{align}
    \E_0 \norm{w^{1} - w^*}^2 \leq \rho \norm{w^0 - w^*}^2  + \M^2. \label{eqn:t_0}
\end{align}
Next, we consider the case when $t > 0$ in~\eqref{eqn:growth_7}. \\

From Section~\ref{sec:ft}, recall that the $w^t$ is a function of random variable $\zeta^{t-1} = \{\zeta^{t-1}_1, \ldots, \, \zeta^{t-1}_n\}$ given $w^{t-1}$. By retracing back to $t = 0$ we obtain that $w^t$ is a function of random variables $\zeta^0, \ldots, \, \zeta^{t-1}$, given the initial estimate $w^0$. As $w^{t+1}$ is a function of $w^t$ and $\zeta^t$, $\norm{w^{t+1} - w^*}^2$ is a function of random variables $\zeta^0, \ldots, \, \zeta^{t-1}$, given the initial estimate $w^0$. Let, for all $t > 0$,
\begin{align*}
    \E_{\zeta^t | \zeta^0, \ldots, \, \zeta^{t-1}} \norm{w^{t+1} - w^*}^2 
\end{align*}
denote the conditional expectation of $\norm{w^{t+1} - w^*}^2$ given the random variables $\zeta^0, \ldots, \, \zeta^{t-1}$ and $w^0$. Thus, for $t > 0$,
\begin{align}
    \E_{\zeta^t} \norm{w^{t+1} - w^*}^2 = \E_{\zeta^t | \zeta^0, \ldots, \, \zeta^{t-1}} \norm{w^{t+1} - w^*}^2. \label{eqn:growth_t}
\end{align}
Substituting from~\eqref{eqn:growth_t} in~\eqref{eqn:growth_7} we obtain that, given $w^0$,
\begin{align}
    \E_{\zeta^t | \zeta^0, \ldots, \, \zeta^{t-1}} \norm{w^{t+1} - w^*}^2 \leq \rho \norm{w^t - w^*}^2  + \M^2. \label{eqn:growth_t_1}
\end{align}
Now, note that due to Baye's rule, for all $t > 0$,
\begin{align*}
    & \E_{\zeta^0, \ldots, \, \zeta^{t}} \norm{w^{t+1} - w^*}^2 \\
    & = \E_{\zeta^0, \ldots, \, \zeta^{t-1}} \left( \E_{\zeta^t | \zeta^0, \ldots, \, \zeta^{t-1}} \norm{w^{t+1} - w^*}^2 \right).
\end{align*}
Substituting from~\eqref{eqn:growth_t_1} above implies that, given $w^0$,
\begin{align*}
    & \E_{\zeta^0, \ldots, \, \zeta^{t}} \norm{w^{t+1} - w^*}^2 \leq \E_{\zeta^0, \ldots, \, \zeta^{t-1}} \left(\rho \norm{w^t - w^*}^2  + \M^2\right) \nonumber \\
    & = \rho \,\E_{\zeta^0, \ldots, \, \zeta^{t-1}} \norm{w^t - w^*}^2 + \M^2, \quad \forall t > 0. 
\end{align*}
Recall from~\eqref{eqn:notation_exp} that notation $\E_t$ represents the joint expectation $\E_{\zeta^0, \ldots, \, \zeta^{t}}$ given $w^0$ for all $t$. Upon substituting this notation above we obtain that, for all $t > 0$,
\begin{align}
    \E_{t} \norm{w^{t+1} - w^*}^2 \leq \rho \,\E_{t-1} \norm{w^t - w^*}^2 + \M^2. \label{eqn:growth_t_2}
\end{align}
Finally, we use induction to show~\eqref{eqn:rate}, i.e., $\forall t \geq 0$,
\begin{align*}
    \E_{t} \norm{w^{t+1} - w^*}^2 \leq \rho^{t+1} \norm{w^{0} - w^*}^2 + \left(\frac{1 - \rho^{t+1}}{ 1- \rho}\right) \M^2.
\end{align*}

Recall that~\eqref{eqn:rate} trivially holds true for $t = 0$ due to~\eqref{eqn:t_0}. Assume that~\eqref{eqn:rate} holds true for $t = \tau-1$ where $\tau \geq 2$, i.e., 
\begin{align}
    \E_{\tau-1} \norm{w^{\tau} - w^*}^2 \leq \rho^{\tau} \norm{w^{0} - w^*}^2 + \left(\frac{1 - \rho^{\tau}}{ 1- \rho}\right) \M^2. \label{eqn:tau_ind_1}
\end{align}
From~\eqref{eqn:growth_t_2} we obtain that
\begin{align}
    \E_{\tau} \norm{w^{\tau+1} - w^*}^2 \leq \rho \,\E_{\tau-1} \norm{w^{\tau} - w^*}^2 + \M^2. \label{eqn:tau_ind_2}
\end{align}
Substituting from~\eqref{eqn:tau_ind_1} in~\eqref{eqn:tau_ind_2} above we obtain that
\begin{align*}
    \E_{\tau} \norm{w^{\tau+1} - w^*}^2 \leq \rho^{\tau + 1} \norm{w^{0} - w^*}^2 + \left(\frac{1 - \rho^{\tau+1}}{ 1- \rho} \right)\M^2.
    \label{eqn:tau_ind_3}
\end{align*}
The above proves~\eqref{eqn:rate} for $t = \tau$. Hence, due to reasoning by induction,~\eqref{eqn:rate} holds true for all $t \geq 0$.

\end{document}